\newtheorem{theorem}{Theorem}
\newtheorem{definition}{Definition}
\newtheorem{cor}{Corollary}
\newtheorem{prop}{Proposition}
\newtheorem{example}{Example}
\begin{document}

\begin{frontmatter}

\title{\Large Uphill Roads to Variational Tightness: Monotonicity and Monte Carlo Objectives}
\runtitle{Monte Carlo Objectives}


\author{\fnms{Pierre-Alexandre} \snm{Mattei}\ead[label=e1]{pierre-alexandre.mattei@inria.fr}}
\address{Université Côte d'Azur \\Inria, Maasai project-team \\Laboratoire J.A. Dieudonné, CNRS \\ \printead{e1}}
\author{\fnms{Jes} \snm{Frellsen}\ead[label=e2]{jefr@dtu.dk}}
\address{Department of Applied Mathematics and Computer Science\\  Technical University of Denmark\\ \printead{e2}}

\runauthor{Mattei \& Frellsen}

\begin{abstract}
	We revisit the theory of importance weighted variational inference (IWVI), a promising strategy for learning latent variable models. IWVI uses new variational bounds, known as Monte Carlo objectives (MCOs), obtained by replacing intractable integrals by Monte Carlo estimates---usually simply obtained via importance sampling.
	\citet{burda2016} showed that increasing the number of importance samples provably tightens the gap between the bound and the likelihood. Inspired by this simple monotonicity theorem, we present a series of nonasymptotic results that link properties of Monte Carlo estimates to tightness of MCOs. We challenge the rationale that smaller Monte Carlo variance leads to better bounds. We confirm theoretically the empirical findings of several recent papers by showing that, in a precise sense, negative correlation reduces the variational gap. We also generalise the original monotonicity theorem by considering non-uniform weights. We discuss several practical consequences of our theoretical results. Our work borrows many ideas and results from the theory of stochastic orders.
\end{abstract}





\end{frontmatter}

\section{Introduction}

Often, objective  functions that arise in machine learning applications involve seemingly intractable high-dimensional integrals. Examples include likelihood-based inference of latent variable models (because the likelihood can be written as an integral over the latent space) or models with unnormalised densities (a.k.a.\@ energy-based models, because they involve intractable normalising constants), hard attention problems (because they involve marginalising over all possible ``glances'' of the observations, see e.g.\@ \citealp{ba2015learning}), or information-theoretic  representation learning (see e.g.\@ \citealp{alemi2017}).
\emph{Variational inference} constitutes a toolbox of techniques that tackle this issue by replacing the objective function to maximise by a lower bound of it (that is supposed to be easier to compute and/or optimise).

A recent and promising approach to variational inference was proposed by \citet{burda2016}, notably building on prior work by \citet{bornschein2014reweighted}. The idea is simply to replace the intractable integrals by Monte Carlo estimates of it, and optimise the expected value of this approximation with respect to both model parameters and the randomness induced by the Monte Carlo approximation. Following \citet{mnih2016}, these new bounds are called \emph{Monte Carlo objectives (MCOs)}, and are typically obtained using importance sampling with a parametrised posterior that can be optimised. This new flavour of variational inference is usually called \emph{importance weighted variational inference (IWVI)}. 

While they were originally developed to learn unsupervised deep latent variable models similar to variational autoencoders (VAEs, \citealp{kingma2014,rezende2014}), MCOs have been successfully applied to a diverse family of problems, including inference for Gaussian processes  \citep{salimbeni2019deep}, sequential models \citep{maddison2017,naesseth2018variational,anh2018} or exponential random graphs \citep{tan2020bayesian}, missing data imputation \citep{mattei2019,ipsen2021not}, causal inference \citep{josse2020}, neural spike inference \citep{speiser2017}, dequantisation \citep{hoogeboom2020learning}, verification of deep discriminative models \citep{che2020}, and general Bayesian inference \citep{domke2018,domke2019}.

These empirical successes have been calling for theoretical developments. For example, a natural question is then: \emph{how do properties of the Monte Carlo estimate translate into properties the variational bound?} This question, which is the main topic of this paper, has until now mostly been tackled from an \emph{asymptotic} point of view. More specifically, most results are concerned with the behaviour of MCOs when the number of Monte Carlo samples go to infinity (or when the variance goes to zero). This contrasts with the fact that, in practice, the number of samples rarely exceeds a few dozens (for computational reasons), and the variance is very large if not infinite. Motivated by this gap between theory and practice, our perspective here is \emph{non-asymptotic}. One exception to the asymptotic focus is the beautifully simple result proven by \citet{burda2016}: when the weights are exchangeable, \emph{increasing the number of samples always improves the tightness of the bound}. This monotonicity theorem, which we will refer to as \emph{sample monotonicity}, is the main inspiration of our paper.

\subsection{Contributions and organisation of the paper}

We begin by a review of the theory and applications of MCOs (Section 2), that we use to justify a quite general mathematical framework. Then, we explore the interplay between MC properties and variational tightness by gradually increasing the strength of the assumtions:
\begin{itemize}
	\item We start in Section 3 by considering general estimates (not necessarily based on importance sampling). This leads us to challenge the popular heuristic that reducing MC variance improves the bound. We propose the stronger notion of \emph{convex order} in order to control the tightness  of the bound.
	\item We then consider general multiple importance sampling, with potentially different proposals (Section 4), and leverage the theory of stochastic orders to show that negative dependence provably leads to better bounds (Theorem 1), confirming theoretically several recent works.
	\item Adding the assumption that the weights are exchangeable, we present a generalisation of the monotonicity theorem of \citet{burda2016}, that we trace back to \citet{marshall1965inequality}.
\end{itemize}
Along the way, we discuss some practical consequences of our theoretical results.

\section{Variational inference using Monte Carlo objectives}

In this section, we present the general context of variational inference using Monte Carlo objectives. We briefly review prior work and present a general mathematical framework to analyse these objectives.

We consider some data $\mathbf{x} \in \mathcal{X}$ governed by a latent variable $\mathbf{z} \in \mathcal{Z}$ through a model with density
\begin{equation}
	\label{eq:model}
	p(\mathbf{x},\mathbf{z}) = p (\mathbf{z})p(\mathbf{x}|\mathbf{z}),
\end{equation}
with respect to a dominating measure on $\mathcal{X} \times \mathcal{Z}$. We use densities because they are more conventionally used in the latent variable model literature, although a more general measure-theoretic framework could also be contemplated, in the fashion of \citet[Appendix B]{domke2019}. The model \eqref{eq:model} may, or may not be Bayesian, depending on whether or not unknown parameters are included in the latent variable $\mathbf{z}$.

\subsection{Inference via Monte Carlo objectives}

Typically, the latent variable models we focus on depend on many parameters that we would like to learn via (potentially approximate) maximum likelihood. Since $\mathbf{z}$ is hidden and only $\mathbf{x}$ is observed, the log-likelihood (or log-marginal likelihood if the model is Bayesian) is equal to
\begin{equation}
	\ell = \log p(\mathbf{x}) = \log \int_\mathcal{Z} p(\mathbf{x|\mathbf{z}}) p (\mathbf{z}) d\mathbf{z}.
\end{equation}

A fruitful idea to approach $\ell$ is to replace the typically intractable integral $p(\mathbf{x})$ inside the logarithm by a Monte Carlo estimate of it. Of particular interest are unbiased estimates, since they lead to lower bounds of the likelihood $\ell$. Indeed, if $R$ is a random variable such that $R>0$ and $\mathbb{E}[R] = p(\mathbf{x})$, then the quantity $\mathcal{L} = \mathbb{E}[\log R]$ is a lower bound of the likelihood $\ell$, by virtue of Jensen's \citeyearpar{jensen1905,jensen1906} inequality and the concavity of the logarithm.
Moreover, the fact that, in $\mathcal{L}$, the expectation is now located \emph{outside} of the logarithm means that $\mathcal{L}$ is more suited for stochastic optimisation techniques (which require unbiased estimates of the gradient of the objective function, see e.g.~\citealp{bottou2018}). The lower bound $\mathcal{L}$ is called a \emph{Monte Carlo objective (MCO)}, and is usually maximised in lieu of the likelihood. 

In this paper, we will study in particular importance sampling estimates of the form
\begin{equation}
	\label{eq:is}
	R_K = \frac{1}{K}  \sum_{k=1}^K \frac{p(\mathbf{x}|\mathbf{z}_k)p(\mathbf{z}_k)}{q(\mathbf{z}_k | \mathbf{x})},
\end{equation}
where $\mathbf{z}_1,\ldots,\mathbf{z}_K$ follow a \emph{proposal distribution} $q(\mathbf{z}_1,\ldots,\mathbf{z}_K | \mathbf{x})$ that usually is a function of the data $\mathbf{x}$ (e.g.\@ via a neural network, as in VAEs). The corresponding MCO is then  $\mathcal{L}_K = \mathbb{E} [\log R_K]$, which may be optimised using stochastic optimisation.

\subsection{A brief history of MCOs and IWVI}

Using importance sampling to approximate a (marginal) likelihood is quite an old idea (see e.g.\@ \citealp{geweke1989bayesian}, and references herein). The idea to \emph{use this approximation as an objective function for inference} is not new either: for example \emph{Monte Carlo maximum likelihood} (MCML, see e.g.\@ \citealp{geyer1994convergence}, and references herein) is a popular inference technique that aims at maximising the approximation $\log R$ of the likelihood. Until now, it seems that the connections between MCML and MCOs have not been discussed in the literature. For this reason, let us spend a few lines on this.
Essentially, MCML differs from the MCO approach in three ways:
\begin{itemize}
	\item the objective function of MCML is the random quantity $\log R$, while a MCO is a deterministic function $\mathbb{E}[\log R]$,
	\item  a MCO is generally jointly optimised over both the model parameters, i.e.\@ the parameters of the distribution $p(\mathbf{x})$, and the parameters of the proposal distribution $q$, while MCML generally separates the two steps,
	\item MCOs have deep connections with variational inference, and can be interpreted as divergences between the posterior distribution of the latent variables and an approximation of it \citep{domke2018,domke2019}.
\end{itemize}
The \emph{reweighted wake-sleep (RWS)} algorithm of  \citet{bornschein2014reweighted} is one step closer to IWVI. The idea of RWS is to repeat the following steps:
\begin{itemize}
	\item wake-phase: the bound $\mathcal{L}$ is optimised with respect to the model parameters,
	\item sleep-phase: the proposal $q(\mathbf{z}|\mathbf{x})$ is optimised by minimising its Kullback-Leibler divergence to the true posterior $p(\mathbf{z}|\mathbf{x})$.
\end{itemize}
Both steps generally involve approximate optimisation by performing a few stochastic gradient steps. The main difference between RWS and IWVI is that RWS involves two objective functions to be optimised alternatively, and IWVI maximises a single objective: the MCO. Interesting discussions on the links between RWS and IWVI include \citet{dieng2019reweighted}, \citet{finke2019importance}, \citet{le2020revisiting}, and \citet{kim2020}.

An important point that we  will not explore in this paper is the optimisation, of MCOs. Naive stochastic gradient descent may encounter severe problems, in particular when using many samples---this issue, and some remedies, are explored for example by \citet{rainforth18}, \citet{tucker2018}, and \citet{lievin2020optimal}.
Regarding more applied advances, multiple references of successful applications of MCOs are listed in the beginning of the introduction of this paper (in a wide variety of domains, including e.g.\@ causal inference, missing data imputation, Gaussian process inference, or neural imaging).

The main question that motivates this paper is:
\emph{What are the properties of the function $(K,q) \mapsto \mathcal{L}_K(q) $?} In particular, we would like to know how changing $q$ and $K$ will affect the \emph{likelihood gap} $\mathcal{G}_K (q) = \ell - \mathcal{L}_K(q) $.

Quite a large body of work has been devoted to studying the asymptotics of $\mathcal{G}_K (q)$, as we will detail in Section \ref{sec:variance_heuristic}.
Our perspective here is quite different. In the spirit of the original monotonicity result of \citet{burda2016}, we wish to obtain \emph{non-asymptotic guarantees} about the behaviour of $\mathcal{L}_K(q)$ and $\mathcal{G}_K (q)$ when $K$ and $q$ vary.

\subsection{General setting and notations}

Motivated by the questions above, we focus on the following formal context, which is slightly more general than the one described above.

We consider a potentially infinite sequence of positive random variables $\mathbf{w} = (w_k)_{k \in\mathcal{K} } $ with common mean $\mu > 0$. This sequence, called the \emph{sequence of importance weights}, is indexed by $\mathcal{K} = \{1,\ldots,K_{\textup{max}}\}$, where $K_{\textup{max}} \in \mathbb{N}^* \cup \{\infty\}$. The joint distribution of $\mathbf{w}$ is denoted by $Q$. Note that we do not make any assumption on $Q$ yet (it may have diverse marginals, not be factored, not be absolutely continuous). For all $K \in\mathcal{K}$, the simple Monte Carlo estimate of $\mu > 0$ is $R_K= S_K/K$, where $S_K = w_1 +\ldots+w_K$.
The \emph{sequence of Monte Carlo objectives} $\boldsymbol{\mathcal{L}}(Q) = (\mathcal{L}_K (Q))_{K \in\mathcal{K}} $, is defined by 
\begin{align}
	\mathcal{L}_K (Q) &=  \mathbb{E}_Q\left[ \log \left( \frac{1}{K} \sum_{k=1}^{K}w_k  \right) \right] = \mathbb{E}_Q\left[R_K\right] =  \mathbb{E}_Q\left[ \log S_K \right] - \log K.
\end{align}
It is possible to be slightly more general by replacing the uniform coefficients $1/K,\ldots,1/K$ by a vector $\boldsymbol{\alpha}$ in the $K$-simplex $\Delta_K$. This leads to
\begin{equation}
	\mathcal{L}_{\boldsymbol{\alpha}}(Q) = \mathbb{E}_Q\left[ \log \left(\boldsymbol{\alpha}^T \mathbf{w}  \right) \right] = \mathbb{E}_Q\left[ \log \left(\sum_{k=1}^{K}\alpha_k w_k  \right) \right].
\end{equation}
In particular $\mathcal{L}_{(1/K,\ldots,1/K)}(Q) = \mathcal{L}_{K}(Q)$. Jensen's  \citeyearpar{jensen1905,jensen1906} inequality ensures that, since the logarithm is concave, $\mathcal{L}_{\boldsymbol{\alpha}}(Q) \leq \log \mu$. Note however that it is possible to have $ \mathcal{L}_{\boldsymbol{\alpha}}(Q) = - \infty$ (we will show an example of this in the next section). We may also consider random coefficients $\boldsymbol{\alpha} \sim \nu$, where $\nu$ is a distribution over the simplex $\Delta_K$. In this context, we have $\mathcal{L}_{\boldsymbol{\alpha}}(Q) = \mathbb{E}_{Q}\left[ \log \left(\boldsymbol{\alpha}^T \mathbf{w}  \right) |\boldsymbol{\alpha}   \right]$ and this leads to the bound $\mathbb{E}_\nu[\mathcal{L}_{\boldsymbol{\alpha}}(Q)] \leq \log \mu$.

In the context of latent variable models, 
$\mu = p(\mathbf{x})$; $\mathbf{w}$ is the sequence of importance weights; for all $ K \in\mathcal{K}$, the distribution of $(w_k)_{k \leq K}$ is the push-forward of the proposal $q(\mathbf{z}_1,\ldots,\mathbf{z}_K)$ by the mapping 
\begin{equation*}
	(\mathbf{z}_1,\ldots,\mathbf{z}_K) \mapsto \left(\frac{p(\mathbf{x}|\mathbf{z}_1)p(\mathbf{z}_1)}{q(\mathbf{z}_1)},\ldots, \frac{p(\mathbf{x}|\mathbf{z}_K)p(\mathbf{z}_K)}{q(\mathbf{z}_K)}\right);
\end{equation*}
and $R_K$ is the unbiased estimate of the likelihood $p(\mathbf{x})$ defined by importance sampling, as in Equation \eqref{eq:is}. The non-uniform version $\mathcal{L}_{\boldsymbol{\alpha}}$ corresponds to using \emph{multiple importance sampling} (see e.g.\@ \citealp{elvira2019generalized}, for a general review). To stress the fact that $R_K$ depends on $\mathbf{x}$ and approximates $p(\mathbf{x})$, we will also note it $\hat{p}(\mathbf{x})$ instead of $R_K$ in sometimes (e.g.\@ in Example \ref{ex:div}).

We believe that this simple but general framework covers most ways of defining importance-sampling based MCOs, from the original ones of \citet{burda2016}, corresponding to i.i.d.\@ weights with uniform coefficients, to the more elaborated ones of  \citet{huang2019}, where the weights are correlated and not identically distributed, and notably statistically dependent on their coefficients $\boldsymbol{\alpha}$.

\subsection{Warm-up: sample monotonicity}

As an illustration of the kinds of monotonicity results we wish to prove, let us start by re-stating the sample monotonicity result of \citet{burda2016}, which is the main inspiration of this paper.

\begin{theorem}[\textbf{sample monotonicity}, \citealp{burda2016}]
	\label{th:sample_mono}
	If Q is exchangeable, then $\boldsymbol{\mathcal{L}}(Q)$ is nondecreasing, i.e.\@ for all $ K  \in \{1,\ldots, K_\textup{max} - 1\}$,
	\begin{equation}
		\mathcal{L}_K (Q) \leq \mathcal{L}_{K+1} (Q).
	\end{equation}
\end{theorem}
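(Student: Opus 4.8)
The plan is to exhibit $R_{K+1}$ as a plain arithmetic average of ``leave-one-out'' sub-estimates, each of which has the same law as $R_K$ by exchangeability, and then to apply Jensen's inequality to that finite average (on the concave logarithm) rather than to the expectation $\mathbb{E}_Q$.

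First I would introduce, for each $j \in \{1,\ldots,K+1\}$, the leave-one-out estimate
\[
	R_K^{(-j)} = \frac{1}{K}\sum_{\substack{k=1 \\ k \neq j}}^{K+1} w_k,
\]
formed from the $K$ weights other than $w_j$ out of the first $K+1$. A one-line counting argument shows that $R_{K+1}$ is exactly the mean of these $K+1$ estimates: each weight $w_k$ is omitted from precisely one subset and hence appears in $K$ of them, so
\[
	\frac{1}{K+1}\sum_{j=1}^{K+1} R_K^{(-j)} = \frac{1}{K+1}\cdot\frac{1}{K}\cdot K\sum_{k=1}^{K+1} w_k = R_{K+1}.
\]

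Next I would invoke concavity of the logarithm on this representation, keeping the expectation outside. Since $\log$ is concave and the coefficients $1/(K+1)$ sum to one,
\[
	\log R_{K+1} \;\geq\; \frac{1}{K+1}\sum_{j=1}^{K+1}\log R_K^{(-j)},
\]
and taking $\mathbb{E}_Q$ on both sides, by linearity,
\[
	\mathcal{L}_{K+1}(Q) \;\geq\; \frac{1}{K+1}\sum_{j=1}^{K+1}\mathbb{E}_Q\!\left[\log R_K^{(-j)}\right].
\]

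The final step is where exchangeability does all the work: deleting any single index $j$ from an exchangeable sequence of length $K+1$ leaves $K$ weights whose joint law is identical to that of $(w_1,\ldots,w_K)$, so every term $\mathbb{E}_Q[\log R_K^{(-j)}]$ equals $\mathbb{E}_Q[\log R_K] = \mathcal{L}_K(Q)$. The right-hand side above thus collapses to $\mathcal{L}_K(Q)$, giving the claim. The only technical point to check is integrability, but since the weights are positive, $\log R_{K+1}$ and each $\log R_K^{(-j)}$ are well defined in $[-\infty,+\infty)$, and the inequality persists in the extended reals even when $\mathcal{L}_K(Q)=-\infty$. I expect the genuinely essential obstacle to be isolating exactly why exchangeability—rather than, say, mere identical marginals—is needed: it is precisely the equality in law of the \emph{joint} leave-one-out subvectors that makes all $K+1$ expectations coincide, and without it the averaged lower bound no longer simplifies to $\mathcal{L}_K(Q)$.
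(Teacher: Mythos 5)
Your proof is correct, and it is essentially a reconstruction of the original argument of \citet{burda2016}, which this paper invokes only by citation (noting that it carries over verbatim from i.i.d.\@ to exchangeable weights): the leave-one-out averaging identity, Jensen's inequality applied to the finite uniform average inside the expectation, and the fact that deleting one coordinate of an exchangeable $(K+1)$-vector leaves a subvector with the joint law of $(w_1,\ldots,w_K)$. The paper's own internal derivation takes a different route: it recovers Theorem \ref{th:sample_mono} as a corollary of the majorisation result of \citet{marshall1965inequality} (Theorem \ref{th:marshall} and Corollary \ref{cor:non-unif}), via $(1/(K+1),\ldots,1/(K+1)) \preceq_\textup{M} (1/K,\ldots,1/K,0)$; there, the work done by your leave-one-out decomposition is done instead by the symmetry and convexity of $\boldsymbol{x} \mapsto \mathbb{E}_Q[\phi(\boldsymbol{x}^T\mathbf{w})]$ combined with the convex-hull characterisation of majorisation. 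Your route is more elementary and self-contained, and---since your Jensen step uses only concavity---it in fact yields the stronger convex-order conclusion $R_{K+1} \preceq_\textup{CX} R_{K}$ if $\log$ is replaced by an arbitrary concave function, exactly as the paper remarks about Burda's reasoning; the majorisation route buys, for the same exchangeability assumption, the full non-uniform comparison between arbitrary coefficient vectors in $\Delta_K$ (hence the optimality of uniform averaging), of which sample monotonicity is a single special case. Two minor points: your integrability remark can be made airtight by observing that $\log x \leq x - 1$ and the weights have finite mean $\mu$, so the positive parts of $\log R_{K+1}$ and of each $\log R_K^{(-j)}$ are integrable and all expectations are well defined in $[-\infty, \log \mu]$; and your closing diagnosis of where exchangeability is genuinely needed is exactly right---Example \ref{eq:exch_necessary} in the paper confirms that identical marginals alone do not suffice.
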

We remind that exchangeability means that permuting the indices of the weights does not change their distribution. More specifically, for any permutation $\sigma \in S(\mathcal{K})$, $(w_{\sigma(k)})_{ k \in\mathcal{K}}$ and $(w_{k})_{k \in\mathcal{K}}$ are identically distributed.

The version of Theorem \ref{th:sample_mono} that we presented here is slightly more general than the one of \citet{burda2016}, who assumed that the weights are i.i.d.\@ (which is a strictly stronger condition than exchangeability). Nonetheless, their proof also works under exchangeability. Another interesting preliminary remark about the proof of \citet{burda2016} is that their reasoning remains valid if the logarithm is replaced by any other concave function. Most of the results of our paper will share this general property.

While exchangeability is weaker than the i.i.d.\@ assumption, it is stronger than just assuming that the weights are identically distributed (i.d.). A first natural question pertaining generalisations of sample monotonicity is therefore: \emph{is it sufficient to have i.d.\@ weights?} The answer is no, as shown by the following simple counter-example.

\begin{example}[\textbf{i.d.\@ is not enough}]
	\label{eq:exch_necessary}
	Let $x$, $y$ be i.i.d.\@ positive random variables. Using  the identically distributed (but non-exchangeable) weights $w_1 =x$, $w_2 =y$,  $w_3 =x$ leads to $\mathcal{L}_2(Q)  \geq  \mathcal{L}_3(Q) $.
\end{example}
Example \ref{eq:exch_necessary} will turn out to a be consequence of a generalisation of sample monotonicity with non-uniform coefficients, see Equation \eqref{eq:proof_exch_necessary}.

\section{Variance reduction as a heuristic towards tighter bounds}

Variance reduction is often considered as the simplest way of improving Monte Carlo estimates. It sounds then natural to assume that variance reduction will lead to tighter bounds. We revisit this rationale here, and challenge it.

\label{sec:variance_heuristic}

\subsection{The variance heuristic}

At its simplest level, what we call the \emph{variance heuristic} may be informally formulated like this: \emph{in a MCO,  if $\textup{Var}(R)$ gets smaller, then $R$ is a more accurate estimate of $\mathbb{E}[R] = \mu$, and the variational bound $\mathbb{E}[\log R]$ gets tighter}. It is possible to be more formal by Taylor-expanding the logarithm of $R$ around $\mu$:

\begin{equation}
	\log(R) = \log(\mu + (R- \mu) ) = \log(\mu)  + \frac{R- \mu}{\mu} -  \frac{(R- \mu)^2}{2 \mu ^2} + \textup{Rem}(R).
\end{equation}
The Taylor remainder $\textup{Rem}(R)$ may be for example written using its integral form
\begin{equation}
	\textup{Rem}(R,\mu) =  - \int_{\mu}^R  \frac{(R-t)^2}{2  \mu ^2 t^2}dt.
\end{equation}
Then, assuming that $\textup{Var}(R)$ is finite, computing the expectation leads to

\begin{equation}
	\label{eq:taylor}
	\mathbb{E}[\log(R)] =  \log(\mu)  +  \frac{\textup{Var}(R)}{2 \mu^2} + \mathbb{E}[\textup{Rem}(R,\mu)].
\end{equation}
The variance heuristic can then be seen as a consequence of the assumption that, in Equation \eqref{eq:taylor}, the variance term dominates the remainder. In other words, it can be seen as \emph{second order heuristic}.
There are good reasons to believe that this assumption is reasonable when $R$ is very concentrated around $\mu$ (e.g.\@ when $\textup{Var}(R)$ is small). This is the rationale behind the results of
\citet[Proposition 1]{maddison2017}, \citet[Proposition 1]{nowozin2018debiasing}, \citet{klys2018joint}, \citet[Theorem 3]{domke2019}, \citet{huang2019note}, and \citet[Chapter 3]{dhekane2021improving}. Similar ideas (in a setting more general than the one of MCOs) are also present in \citet[Theorem 3]{rainforth2018nesting}. \citet[Section 4]{huang2019} also suggested to look at $\textup{Var}[\log R]$ as an asymptotic indication of  tightness of the bound.

Let us see what might sometimes break in this line of reasoning. First, we have no guarantee that the variance is actually finite. It is even quite common to encounter infinite variance importance sampling estimates, and we will give empirical evidence that the ones commonly used in VAEs have indeed infinite variance. Even assuming that the variance is finite, there are many situations where we could expect the Taylor remainder to be non-negligible. Indeed, the radius of convergence of the logarithm as a power series is quite small (the radius of  $x \mapsto \log(x) $ is $\mu$ around $\mu$). This means that even a high order heuristic will not be accurate if $R$ gets far away from its mean $\mu$.

\subsection{Simple successes, simple failures}

Sample monotonicity can be seen a first example of success of the variance heuristic: adding more importance weights will both reduce Monte Carlo variance and tighten the bound.

\begin{example}[\textbf{sample monotonicity and variance reduction}]
	\label{prop:sm_variance}
	Let $R_K$ be the importance sampling estimate. Let us assume that the weights are exchangeable and have finite variance. Sample monotonicity ensures that the MCO will increase. However, adding samples will also have a variance reduction effect: for all $ K  \in \{1,..., K_\textup{max} - 1\}$,
	\begin{equation}
		\textup{Var}(R_{K+1}) \leq \textup{Var}(R_{K}).
	\end{equation}
	In the case of i.i.d. weights, this simply follows from $\textup{Var}(R_{K})=\textup{Var}(w_1)/K^2$. In the exchangeable case, this can be shown directly or seen as a consequence of a generalised version of sample monotonicity (Theorem \ref{th:marshall}). We will see that, in fact, these two simultaneous monotonicity properties (of the bound and of the variance) are different sides of the same coin.
\end{example}
Let us now look at the general case where $R$ can be any unbiased Monte Carlo estimate (not necessarily obtained via importance sampling). Of course, this is an overly general setting, and some assumptions must be made in order to be able to prove something. For example, we may wonder what happens when $R$ beyond to simple families of distributions. Sometimes, things will go as foretold by the heuristic, as seen below.

\begin{example}[\textbf{a few successes of the variance heuristic}]
	\label{prop:successes}
	Let $R$ and $R'$ be either two gamma, two inverse gamma, or two log-normal distributions with finite and equal means and finite variances. Then
	\begin{equation}
		\label{eq:successes}
		\textup{Var}[R] < \textup{Var}[R'] \iff \textup{Var}[\log R] < \textup{Var}[\log R'] \iff  \mathbb{E}[\log R] > \mathbb{E}[\log R'].
	\end{equation}
\end{example}

The proof is available in Appendix A. The fact that these are exponential families suggests that a more general result may be hidden behind Proposition \ref{prop:successes}. While interesting in its own right, such a result would not be particularly relevant in the context of MCOs. Indeed, in general, with MCOs, $R$ follows a complex distribution very unlikely to belong to an exponential family.

What does it take to violate the heuristic using these kinds of simple distributions?  While comparing two inverse gammas or two log-normals always respects it, simply blending these two family is enough to get severe violations.

\begin{example}[\textbf{severe failure of the variance heuristic}]
	\label{ex:failures}
	Let $R$ be an inverse-gamma variable with finite mean. It is possible to find a log-normal random variable $R'$ such that
	\begin{itemize}
		\item $\mathbb{E}[R] = \mathbb{E}[R']$, $\textup{Var}[R] = \infty$, $\textup{Var}[R'] < \infty$,
		\item $\mathbb{E}[\log R] > \mathbb{E}[\log R']$.
	\end{itemize}
\end{example}

Again, the proof is available in Appendix A. In particular, we show that the gap $\mathbb{E}[\log R] - \mathbb{E}[\log R']$ can be made arbitrarily large by choosing the log-normal parameters (im)properly. This means that, when comparing MCOs, it is possible to be in a situation where \emph{infinitely worse variance leads to an arbitrarily better bound}.
It is also possible to be in a situation that is somehow the opposite of Example \ref{ex:failures}: the variance is finite, but the bound is not.

\begin{example}[\textbf{finite variance, infinitely loose bound}]
	It is possible to find random variables such that $\mathbb{E}[R]= \mu $, $\mathbb{E}[\log R] = - \infty$, and $\textup{Var}(R)<+\infty$. This is for example the case of the finite moment log-stable distributions of \citet{carr2003finite}. This family is constituted of some exponentiated Lévy stable distributions (for more details, see also \citealp{robinson2015}).
\end{example}

\begin{wrapfigure}{R}{0.35\textwidth}
	\centering
	\vspace{-7mm}
	\label{fig:var}
	\includegraphics[width = 0.3\columnwidth, trim = {1.5cm 28cm 61cm 0.5cm}, clip]{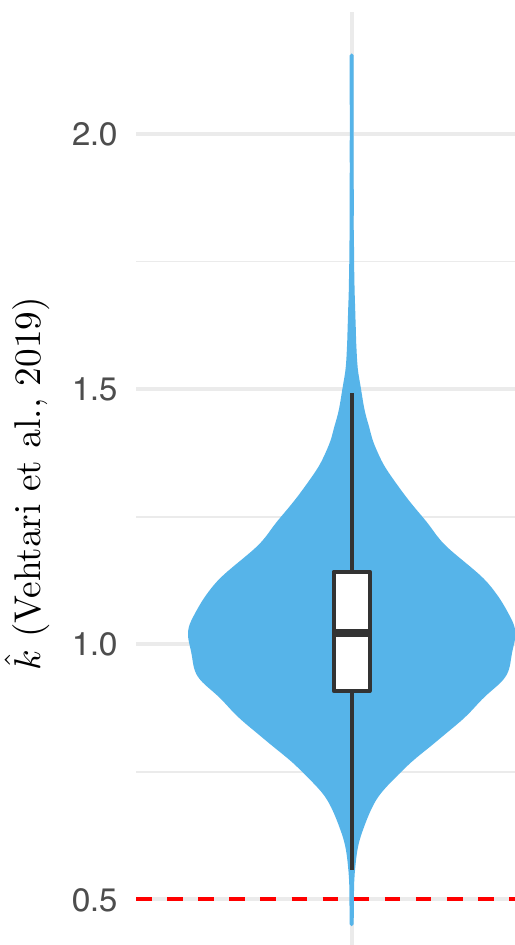}
	\caption{Importance sampling diagnostics for a VAE on MNIST. To each training digit corresponds a value of $\hat{k}$. Values of $\hat{k}$ above the dashed line correspond to digits whose weights have potentially infinite variance.}
	\vspace{-15mm}
\end{wrapfigure}

While it is not very surprising to find counter examples of these sorts, it is interesting to see that such severe failures may be observed using quite simple distributions.  This phenomenon is reminiscent of the line of thought of \citet{chatterjee2018}, who argued that the variance is not a very good metric for devising good importance sampling estimates.

\subsection{Is the variance finite in practice?}

It is often the case that importance weights have infinite variance. We provide empirical evidence that this is the case in the simple case of a VAE trained on MNIST (Figure 1). After training, we compute $10{,}000$ weights for each digits that we use to compute the $\hat{k}$ diagnostic of \citet{vehtari2019pareto}.  Most digits have a $\hat{k}>0.5$, and are therefore suspect of having infinite variance. This illustrates again the shortcomings of the variance. More details on this experiment are provided in Appendix B.

\subsection{Beyond variance reduction: the convex order}

While a powerful heuristic, variance reduction is not a strong enough dispersion measure to guarantee the tightening of a bound. Such a measure is provided by a branch of the theory of \emph{stochastic orders} (extensively reviewed in the monograph of \citealp{shaked2007}). The essential idea is  to define binary relations $\preceq$ between distributions (or equivalently random variables) such that $R \preceq R'$ means that, in some sense, $R$ is more concentrated that $R'$. A popular dispersion order is the \emph{convex order} (reviewed for example by \citealp{shaked2007}, Section 3.A).

\begin{definition}
	Let $R \sim \pi_1 $ and $R' \sim \pi_2 $ be two univariate random variables. We say that $R$ is smaller than $R'$ in the convex order if
	\begin{equation}
		\label{eq:cvx}
		\mathbb{E}_{\pi_1}[\phi (R )] \leq \mathbb{E}_{\pi_2}[\phi (R' )],
	\end{equation}
	for all convex functions $\phi$ such that the involved expectations exist. We denote  $R \preceq_\textup{CX} R'$ or $\pi_1 \preceq_\textup{CX}\pi_2$.
\end{definition}

Contrarily to variance reduction, which does not provide general tightening guarantees, convex ordering implies both variance reduction and bound tightening.

\begin{prop}
	Let $R, R' $ be two univariate random variables. We have
	\begin{equation}
		R \preceq_\textup{CX} R' \implies \left\{\begin{matrix}
			\mathbb{E}[\log (R )] \geq \mathbb{E}[\log (R' )] \\ \textup{Var}(R) \leq   \textup{Var}(R') \\ \mathbb{E}[R] = \mathbb{E}[R']
		\end{matrix}\right..
	\end{equation}
\end{prop}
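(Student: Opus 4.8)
The plan is to prove the three conclusions by simply instantiating the defining inequality \eqref{eq:cvx} at a handful of carefully chosen convex test functions $\phi$. The whole proposition is really a dictionary exercise: each desired inequality corresponds to a particular convex $\phi$, and the only care needed is in the bookkeeping of when the expectations are finite.

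First I would establish the mean equality $\mathbb{E}[R] = \mathbb{E}[R']$, since it is needed for the variance step. The key observation is that affine functions are convex, and in particular \emph{both} $\phi(x) = x$ and $\phi(x) = -x$ are convex. Applying \eqref{eq:cvx} to each gives $\mathbb{E}[R] \leq \mathbb{E}[R']$ and $\mathbb{E}[-R] \leq \mathbb{E}[-R']$, i.e.\@ $\mathbb{E}[R] \geq \mathbb{E}[R']$, and combining the two yields the equality. This is the step where the two-sidedness of the convex order (through affine functions, which are both convex and concave) does the work.

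Next I would treat the logarithm term. Since $\log$ is concave, $\phi = -\log$ is convex, so \eqref{eq:cvx} gives $\mathbb{E}[-\log R] \leq \mathbb{E}[-\log R']$, which rearranges to $\mathbb{E}[\log R] \geq \mathbb{E}[\log R']$. Here I would note that this argument is valid even when one or both expectations equal $-\infty$, so no finiteness hypothesis on $\mathbb{E}[\log R]$ is required. Finally, for the variance, I would take the convex function $\phi(x) = x^2$, giving $\mathbb{E}[R^2] \leq \mathbb{E}[(R')^2]$; combined with the already-proven mean equality this yields $\textup{Var}(R) = \mathbb{E}[R^2] - \mathbb{E}[R]^2 \leq \mathbb{E}[(R')^2] - \mathbb{E}[R']^2 = \textup{Var}(R')$.

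There is no serious obstacle; the proof is essentially a sequence of substitutions. The only point deserving attention is the handling of possibly infinite expectations in the variance inequality: if $\textup{Var}(R') = \infty$ the bound is vacuous, while if $\textup{Var}(R')$ is finite then the second-moment inequality forces $\mathbb{E}[R^2]$ to be finite as well, so that the subtraction of squared means is legitimate on both sides. I would simply remark on this rather than belabour it.
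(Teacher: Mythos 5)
Your proof is correct and follows exactly the paper's own argument: the paper's one-line proof invokes precisely the same test functions (the convexity of $-\log$, $x \mapsto x^2$, $x \mapsto x$, and $x \mapsto -x$) applied to the defining inequality of the convex order. Your additional remarks on infinite expectations are sound but just make explicit what the paper leaves implicit.
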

\begin{proof}
	These are direct consequences of the  concavity of the logarithm, and the convexity of $x \mapsto x^2$, $x \mapsto x$, and $x \mapsto -x$.
\end{proof}
This simple result means that \emph{the more concentrated the Monte Carlo estimate (in the sense of the convex order), the tighter the bound} and the smaller the variance. We will see in the next sections that two successful ways of provably getting tighter bounds can be explained from the perspective of the convex order: using more importance weights, and increasing their negative dependence. In these cases, \emph{variance reduction will be merely seen as a side effect of convex domination}.

\paragraph{Is the convex order too strong?} The fact that the inequality \eqref{eq:cvx} needs to hold for every convex function $\phi$ appears like quite a strong condition. Indeed, it would be sufficient to have this for any class of functions containing $-\log$ to be able to control the tightness of the bound. For example, one might only consider decreasing convex functions. This would lead to considering \emph{monotonic convex orders}, other dispersion orders  which are less popular than the convex order, but has some useful properties (see e.g.~\citealp{shaked2007}, Chapter 4). In the specific case of MCOs, convex order and increasing convex order are the same. Indeed, one can show \citep[Theorem 4.A.35]{shaked2007} that, if $\mathbb{E}[R] = \mathbb{E}[R']$,  then 
\begin{multline}
	R \preceq_\textup{CX} R' \iff \mathbb{E}[\phi (R )] \leq \mathbb{E}[\phi (R' )] \textup{ for all decreasing convex $\phi$}\\ \textup{such that both expectations exist}.
\end{multline} 
This means that the convex order is weaker than it looks. Of course, it is still stronger than just looking at the variance.  Beyond a more mathematically convenient framework, what would we gain from this larger generality? The next example provides a simple illustration in the context of latent variable models.

\begin{example}[Divergence control]
	\label{ex:div}
	Let us go back to the context of latent variable models. Here, the unbiased estimate $R$, seen as a function of $\mathbf{x}$, may be viewed as an approximation of the true density of the model $p(\mathbf{x})$. To highlight this, we will denote $R = \hat{p}(\mathbf{x})$. Now how far is $\hat{p}$ from $p$? A natural way of quantifying this is to use probability divergences, for exemple $f$-divergences. For some fixed smooth convex function $f$ such that $f(1)=0$, the $f$-divergence between the density $p_1$ of a finite measure and a probability density $p_2$ is defined as
	\begin{equation}
		D_f(p_1 || p_2) = \int_\mathcal{X} f\left(\frac{p_1(\mathbf{x})}{p_2(\mathbf{x})} \right) p_2(\mathbf{x})d\mathbf{x} + f'(1) \left(\int_\mathcal{X} p_1(\mathbf{x}) d \mathbf{x} -1 \right).
	\end{equation}
	Particular cases of $f$-divergences include e.g.~the popular Kullback Leibler (KL) divergence and its ``reverse" version or the squared Hellinger distance, depending on the choice of $f$. In this version of the definition of $f$-divergences (seen, e.g., in \citealp{stummer2010divergences}), the first density $p_1$ does not have to sum to one, which is fortunate because $\hat{p}$ typically will not. Indeed, the only thing we can guarantee is that $\hat{p}$ sums on average to one:
	\begin{equation}
		\mathbb{E}\left[ \int_\mathcal{X} \hat{p}(\mathbf{x}) d\mathbf{x} \right] =\int_\mathcal{X} \mathbb{E}\left[ \hat{p}(\mathbf{x})  \right] d\mathbf{x}  = \int_\mathcal{X} p(\mathbf{x}) d\mathbf{x} = 1.
	\end{equation}	
	This implies that $\hat{p}$ almost surely corresponds to a finite measure. Therefore, the quantity $D_f(\hat{p} || p)$, which is random because $\hat{p}$ is random, is almost surely well-defined. Its average value will then be
	\begin{align}
		\mathbb{E}[D_f(\hat{p} || p)] &= \mathbb{E}\left[\int_\mathcal{X} f\left(\frac{\hat{p}(\mathbf{x})}{p(\mathbf{x})} \right) p(\mathbf{x})d\mathbf{x}\right] + f'(1) \left(\mathbb{E}\left[\int_\mathcal{X} \hat{p}(\mathbf{x}) d \mathbf{x}\right] -1 \right) \\
		&= \int_\mathcal{X} \mathbb{E}\left[f\left(\frac{\hat{p}(\mathbf{x})}{p(\mathbf{x})} \right)\right] p(\mathbf{x})d\mathbf{x}.
	\end{align} 
	Finally, using the convexity of $f$, we can conclude that
	\begin{equation}
		\label{eq:f-div}
		\left\{ \forall \mathbf{x} \in \mathcal{X}, \;\hat{p}_1(\mathbf{x}) \preceq_\textup{CX} \hat{p}_2(\mathbf{x}) \right\}\implies  \mathbb{E}[ D_f(\hat{p}_1 || p)]  \leq \mathbb{E}[D_f(\hat{p}_2 || p)],
	\end{equation}
	which means that the approximation of the distribution becomes better and better when the MC approximation gets more concentrated.
\end{example}	
This example shows that convex domination can be useful in the LVM context beyond controlling bound tightness. We will use \eqref{eq:f-div} later in the paper to show some monotonicty properties of $f$-divergences in the specific case of importance sampling estimates.

\section{Negative dependence and tighter bounds}

A popular branch of variance reduction techniques is based on \emph{negative dependence}. In its simplest form, this idea is based on the fact that 
\begin{equation}
	\label{eq:negative_cov}
	\textup{Var}(\boldsymbol{\alpha}^T \mathbf{w}) = \sum_{k=1}^K \alpha_k^2 \textup{Var}(w_k) + 2\sum_{1 \leq k <k' \leq K} \alpha_k \alpha_{k'}\textup{Cov}(w_k,w_{k'}),
\end{equation}
which means that negative covariances in the right hand side of Equation \eqref{eq:negative_cov} will lead to a smaller variance of $\boldsymbol{\alpha}^T \mathbf{w}$. \emph{Anthitetic sampling} is for example a famous variance reduction technique based on this idea (see e.g.~\citealp{mcbook}, Section 8.2). A more refined approach is to leverage \emph{determinantal point processes} \citep{bardenet2020monte}.

Variants of this rationale were used successfully in the MCO context by  \citet{klys2018joint}, \citet{huang2019}, \citet{ren2019}, \citet{wu2019}, and \citet{domke2019}. Their motivations were essentially based on variants of the variance heuristic: since negative dependence can reduce the variance, it might also improve the bound. Our goal here is to prove that negative dependence can indeed tighten the bound, giving hereby a non-asymptotic theoretical justification for the works aforementioned.

\subsection{Comparing dependence with the supermodular order}

Let $\mathbf{w} \sim Q_1$ and $\mathbf{v} \sim Q_2$  be two $K$-dimensional random variables with identical marginals, i.e.~$w_k \overset{d}{=} v_k$ for all $k \in \{1,...,K\}$ (some would say that $Q_1$ and $Q_2$ belong to the same Fréchet class, see e.g.~). What mathematical sense could we give to the sentence ``the coordinates of $\mathbf{w}$ are more negatively dependent than those of $\mathbf{v}$"? Again, stochastic orders provide good tools for assessing this. Indeed, the idea of \emph{dependence orders} is  to define binary relations $\preceq$ between distributions such that $Q_1 \preceq Q_2$ means that, in some sense, the coordinates of $\mathbf{w} \sim Q_1$ are more negatively dependent that those of $\mathbf{v}\sim Q_2$. We will  review in this section  a few of these dependence-based stochastic orders (a more detailed overview may be found in \citealp{shaked2007}, Chapter 9, or  \citealp{ruschendorf2013mathematical}, Chapter 6). We focus on the supermodular order, which, as we will see in the next subsection, is the most closely related to Monte Carlo objectives. We first need to define supermodular functions.

\begin{definition}
	A function $\phi: \mathbb{R}^K\rightarrow \mathbb{R}$ is \textbf{supermodular} if, for all $\mathbf{x}, \mathbf{y} \in \mathbb{R}^K$,
	\begin{equation}
		\label{eq:sm}
		\phi(\min(\mathbf{x},\mathbf{y})) + \phi(\max(\mathbf{x},\mathbf{y})) \geq \phi(\mathbf{x})+\phi(\mathbf{y}).
	\end{equation}
\end{definition}
In Equation \ref{eq:sm}, the min and max functions are applied elementwise. We can now define the supermodular order.
\begin{definition}
	Let $Q_1$ and $Q_2$ be two probability distributions over $\mathbb{R}^K$. We say that \textbf{$Q_1$ is smaller than $Q_2$ in the supermodular order} when $$\mathbb{E}_{Q_1}[\phi(\mathbf{w})] \leq \mathbb{E}_{Q_2}[\phi(\mathbf{v})] $$ for all supermodular functions $\phi$ such that the involved expectations  exist. We denote  $Q_1 \preceq_\textup{SM} Q_2$.
\end{definition}
As a first remark, note that $Q_1 \preceq_\textup{SM} Q_2$ implies that $Q_1$ and $Q_2$ have identical marginals.

The supermodular order is one of the most popular stochastic orders when it comes to quantify dependence (see e.g.~\citealp{muller2000,shaked2007}, Chapter 9), notably in the economics and insurance literature (see e.g.~\citealp{muller1997stop,meyer2012increasing,meyer2013}).  \citet[Section 2.2.3]{joe1997} proposed a set of nine axioms that would characterise good dependence orders. A few years later, \citet{muller2000} proved that the supermodular order satisfied all of these desirable properties.

Here is a simple example of supermodular ordering: for two distributions $Q_1, Q_2$ with identical marginals, if the coordinates of $\mathbf{w} \sim Q_1$ are negatively associated, and those of  $\mathbf{v} \sim Q_2$ are independent, then $Q_1 \preceq_\textup{SM} Q_2$ \citep{christofides2004}.




\subsection{The more negatively dependent the weights, the tighter the bound}
An important example of supermodular function is the following: let $\phi$ be a convex function and $\boldsymbol{\alpha}$ a vector with non-negative coefficients, then  $\mathbf{w} \mapsto \phi \left(\boldsymbol{\alpha}^T\mathbf{w} \right)$ is supermodular. Using this fact with $\phi = -\log$ immediately immediately leads to
\begin{equation}
	\label{eq:smcx}
	\mathbf{w}\preceq_\textup{SM} \mathbf{v} \implies \boldsymbol{\alpha}^T\mathbf{w} \preceq_\textup{CX} \boldsymbol{\alpha}^T\mathbf{v},
\end{equation}
and to the following monotonicity theorem.
\begin{theorem}[\textbf{negative dependence tightens the bound}] For all pairs $Q_1, Q_2$ of probability distributions over $\mathbb{R}^K$,
	\begin{equation}
		Q_1\preceq_\textup{SM} Q_2 \implies \mathcal{L}_{\boldsymbol{\alpha}} (Q_1)\geq  \mathcal{L}_{\boldsymbol{\alpha}} (Q_2).
	\end{equation}
\end{theorem}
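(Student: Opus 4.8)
The plan is to reduce everything to the single structural fact announced just before the statement: that $\mathbf{w}\mapsto\phi(\boldsymbol{\alpha}^T\mathbf{w})$ is supermodular whenever $\phi$ is convex and $\boldsymbol{\alpha}$ has non-negative entries. Once this is in hand, the theorem follows by specialising to $\phi=-\log$ and unwinding the definitions of the two orders. So I would spend essentially all of my effort proving this supermodularity claim, and treat the rest as a one-line chaining.

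For that claim I would argue directly from the definition, rather than invoking the mixed-second-derivative criterion (which would require smoothness of $\phi$ and an approximation argument in the general convex case). Fix $\mathbf{x},\mathbf{y}\in\mathbb{R}^K$ and set $a=\boldsymbol{\alpha}^T\mathbf{x}$, $b=\boldsymbol{\alpha}^T\mathbf{y}$, $c=\boldsymbol{\alpha}^T\min(\mathbf{x},\mathbf{y})$, and $d=\boldsymbol{\alpha}^T\max(\mathbf{x},\mathbf{y})$, with min and max taken coordinatewise. Two observations drive the argument. First, $\min(\mathbf{x},\mathbf{y})+\max(\mathbf{x},\mathbf{y})=\mathbf{x}+\mathbf{y}$ coordinatewise, so $c+d=a+b$. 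Second, since $\alpha_i\geq0$ and $\min(\mathbf{x},\mathbf{y})_i\leq x_i,y_i\leq\max(\mathbf{x},\mathbf{y})_i$ for every $i$, we get $c\leq\min(a,b)\leq\max(a,b)\leq d$; that is, $(c,d)$ has the same sum as $(a,b)$ but is a wider pair. I would then write $a=\lambda c+(1-\lambda)d$ with $\lambda\in[0,1]$, whereupon the sum constraint forces $b=(1-\lambda)c+\lambda d$. Convexity of $\phi$ gives $\phi(a)\leq\lambda\phi(c)+(1-\lambda)\phi(d)$ and $\phi(b)\leq(1-\lambda)\phi(c)+\lambda\phi(d)$, which add to $\phi(a)+\phi(b)\leq\phi(c)+\phi(d)$. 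This is precisely the supermodularity inequality for $\mathbf{w}\mapsto\phi(\boldsymbol{\alpha}^T\mathbf{w})$.

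With the lemma in place I would finish quickly. Since $\boldsymbol{\alpha}\in\Delta_K$ has non-negative entries and the weights are positive, $\boldsymbol{\alpha}^T\mathbf{w}>0$ almost surely, so $\log(\boldsymbol{\alpha}^T\mathbf{w})$ is well defined, and $\phi=-\log$ is convex, so $\mathbf{w}\mapsto-\log(\boldsymbol{\alpha}^T\mathbf{w})$ is supermodular. Applying the definition of $\preceq_\textup{SM}$ to this particular supermodular function, $Q_1\preceq_\textup{SM}Q_2$ yields $\mathbb{E}_{Q_1}[-\log(\boldsymbol{\alpha}^T\mathbf{w})]\leq\mathbb{E}_{Q_2}[-\log(\boldsymbol{\alpha}^T\mathbf{v})]$, i.e.\ $-\mathcal{L}_{\boldsymbol{\alpha}}(Q_1)\leq-\mathcal{L}_{\boldsymbol{\alpha}}(Q_2)$, which is the claim $\mathcal{L}_{\boldsymbol{\alpha}}(Q_1)\geq\mathcal{L}_{\boldsymbol{\alpha}}(Q_2)$. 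Alternatively one can route through \eqref{eq:smcx}: the lemma gives $\boldsymbol{\alpha}^T\mathbf{w}\preceq_\textup{CX}\boldsymbol{\alpha}^T\mathbf{v}$, and the earlier Proposition on the convex order then delivers the inequality on $\mathbb{E}[\log(\cdot)]$.

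I expect the only genuine subtlety — the main obstacle — to be the supermodularity of $\mathbf{w}\mapsto\phi(\boldsymbol{\alpha}^T\mathbf{w})$ for a possibly non-smooth convex $\phi$, which is why I favour the majorisation-style argument above over the Hessian criterion. A secondary point deserving one sentence is integrability: the supermodular order compares only expectations that exist, so one should note that the relevant quantities are $\mathcal{L}_{\boldsymbol{\alpha}}(Q_i)\in[-\infty,\log\mu]$ and read the inequality in the extended-real sense, consistent with the paper's earlier observation that $\mathcal{L}_{\boldsymbol{\alpha}}$ may equal $-\infty$.
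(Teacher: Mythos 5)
Your proof is correct and takes essentially the same route as the paper, which likewise obtains the theorem in one line by applying the definition of $\preceq_\textup{SM}$ to the supermodular function $\mathbf{w}\mapsto-\log(\boldsymbol{\alpha}^T\mathbf{w})$, i.e.\ via Equation \eqref{eq:smcx}. The only difference is that the paper merely asserts the key lemma---supermodularity of $\mathbf{w}\mapsto\phi(\boldsymbol{\alpha}^T\mathbf{w})$ for convex $\phi$ and non-negative $\boldsymbol{\alpha}$---whereas you supply a correct, smoothness-free proof of it (via $c+d=a+b$, $c\leq\min(a,b)\leq\max(a,b)\leq d$, and convexity), together with the sensible extended-real reading of the expectations that the paper leaves implicit.
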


In other words, \emph{the lower bound gets tighter when the weights get more negatively dependent} (in the supermodular sense).
This gives a theoretical support to the successful recent applications of negative dependence to tighten variational bounds. Beyond bound tightening, combining Equation \eqref{eq:smcx} and Example \ref{ex:div} ensures that more negative dependence will also provide more accurate likelihood estimators according to any $f$-divergence.

The main limitation of our result is that it is difficult to control the supermodular order in practice. A silver lining to this is the central role played by the supermodular order among dependence measures. In particular, the popular notion of \emph{negative association} is in a sense stronger than the supermodular order (for a more general result than the simple one from \citealp{christofides2004}, cited above, see \citealp{shaked2007}, Theorem 9.E.8). For instance, when $K=2$,
\begin{equation}
	\textup{Cov}_{Q_1}(h_1(w_1),h_2(w_2)) \leq \textup{Cov}_{Q_2}(h_1(v_1),h_2(v_2))
\end{equation}
for all increasing functions $h_1$ and $h_2$ implies that $Q_1 \preceq_\textup{SM} Q_2$.

\section{A non-uniform generalisation of Burda's result via majorisation}

In this section, we wish to extend sample monotonicty to the case of non-uniform bounds  $\mathcal{L}_{\boldsymbol{\alpha}}(Q)$. To this end, we use the concept of \emph{majorisation} that was popularised in the influential book of \citet[Chapter 2]{hardy1952inequalities}. For a good review of majorisation and its applications, see \citet{marshall2011inequalities}.

\begin{definition}[\textbf{majorisation}]
	Let $\boldsymbol{\alpha},\boldsymbol{\beta} \in \Delta_K$. We say that $\boldsymbol{\alpha}$ \textbf{majorises} $\boldsymbol{\beta}$ if $\boldsymbol{\beta}$  is in the convex hull of all vectors obtained by permuting the coordinates of  $\boldsymbol{\alpha}$. We denote $\boldsymbol{\alpha}  \preceq_\textup{M} \boldsymbol{\beta}$. This is equivalent to the condition
	\begin{equation}
		\forall k \in \{1,...,K\}, \;	\sum_{j=1}^k \alpha^{(j)}\leq \sum_{j=1}^k \beta^{(j)},
	\end{equation}
	where  $({\alpha}^{(1)},...,{\alpha}^{(K)})$ and  $({\beta}^{(1)},...,{\beta}^{(K)})$ are reordered versions of $\boldsymbol{\alpha}$ and $\boldsymbol{\beta}$, sorted in decreasing order.
\end{definition}
Roughly speaking, $\boldsymbol{\alpha}  \preceq_\textup{M} \boldsymbol{\beta}$ when the coefficients of $\boldsymbol{\beta}$ are ``more spread out" than those of $\boldsymbol{\alpha} $. Indeed, for example, we have for all $\boldsymbol{\alpha} \in \Delta_K$,
\begin{equation}
	\label{eq:spread_out}
	(1/K,...,1/K) \preceq_\textup{M} \boldsymbol{\alpha} \preceq_\textup{M} (1,0,...,0).
\end{equation}
Other examples of majorisation can be found in \citet{marshall2011inequalities}, notably in Chapters 1 and 5.

We can now state and prove the more general version of sample monotonicity of \citet{marshall1965inequality}. Note that our proof is quite different, but not particularly original. Our proof is similar in spirit to a simple proof of the result that any convex and symmetric function is Schur-convex, i.e. respects the majorisation order (see e.g. the proof of \citealp{marshall2011inequalities}, Proposition C.3).

\begin{theorem}[\citealp{marshall1965inequality}]
	\label{th:marshall}
	Let $\boldsymbol{\alpha},\boldsymbol{\beta} \in \Delta_K$. If the weights are exchangeable, then
	\begin{equation}
		\boldsymbol{\alpha}\preceq_\textup{M} \boldsymbol{\beta} \implies \boldsymbol{\alpha}^T\mathbf{w} \preceq_\textup{CX} \boldsymbol{\beta}^T\mathbf{w}.
	\end{equation}
\end{theorem}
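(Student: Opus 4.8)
The plan is to reduce the statement to the defining inequality of the convex order—namely $\mathbb{E}[\phi(\boldsymbol{\alpha}^T\mathbf{w})]\leq\mathbb{E}[\phi(\boldsymbol{\beta}^T\mathbf{w})]$ for every convex $\phi$—and to obtain it by combining the geometric characterisation of majorisation with a single Jensen step and exchangeability, mirroring the classical argument that a convex symmetric function is Schur-convex (as the paragraph preceding the statement already anticipates). First I would invoke the characterisation recalled in the definition of $\preceq_\textup{M}$: the condition $\boldsymbol{\alpha}\preceq_\textup{M}\boldsymbol{\beta}$ places $\boldsymbol{\alpha}$ in the convex hull of the vectors obtained by permuting the coordinates of $\boldsymbol{\beta}$ (the Hardy–Littlewood–Pólya / Birkhoff correspondence between majorisation and doubly stochastic matrices). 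Hence there exist permutations $\sigma\in S(\mathcal{K})$ and coefficients $\lambda_\sigma\geq 0$ with $\sum_\sigma\lambda_\sigma=1$ such that $\boldsymbol{\alpha}=\sum_\sigma\lambda_\sigma\,\boldsymbol{\beta}_\sigma$, where $\boldsymbol{\beta}_\sigma$ denotes $\boldsymbol{\beta}$ with its entries permuted by $\sigma$.

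Next, I would fix an arbitrary convex function $\phi$ for which the relevant expectations exist. Since $\boldsymbol{\alpha}^T\mathbf{w}=\sum_\sigma\lambda_\sigma\,\boldsymbol{\beta}_\sigma^T\mathbf{w}$ is a finite convex combination of the random variables $\boldsymbol{\beta}_\sigma^T\mathbf{w}$, applying Jensen's inequality pointwise gives $\phi(\boldsymbol{\alpha}^T\mathbf{w})\leq\sum_\sigma\lambda_\sigma\,\phi(\boldsymbol{\beta}_\sigma^T\mathbf{w})$ almost surely. Taking expectations on both sides then yields $\mathbb{E}[\phi(\boldsymbol{\alpha}^T\mathbf{w})]\leq\sum_\sigma\lambda_\sigma\,\mathbb{E}[\phi(\boldsymbol{\beta}_\sigma^T\mathbf{w})]$.

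The crucial step is to collapse the right-hand side using exchangeability. Writing $\boldsymbol{\beta}_\sigma^T\mathbf{w}=\sum_k\beta_{\sigma(k)}w_k$ and reindexing by $j=\sigma(k)$, one sees that $\boldsymbol{\beta}_\sigma^T\mathbf{w}=\boldsymbol{\beta}^T(w_{\sigma^{-1}(1)},\dots,w_{\sigma^{-1}(K)})$. Because $Q$ is exchangeable, the permuted weight vector has the same law as $\mathbf{w}$, so $\boldsymbol{\beta}_\sigma^T\mathbf{w}\overset{d}{=}\boldsymbol{\beta}^T\mathbf{w}$ and therefore $\mathbb{E}[\phi(\boldsymbol{\beta}_\sigma^T\mathbf{w})]=\mathbb{E}[\phi(\boldsymbol{\beta}^T\mathbf{w})]$ for every $\sigma$. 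Substituting and using $\sum_\sigma\lambda_\sigma=1$ gives $\mathbb{E}[\phi(\boldsymbol{\alpha}^T\mathbf{w})]\leq\mathbb{E}[\phi(\boldsymbol{\beta}^T\mathbf{w})]$, which is exactly $\boldsymbol{\alpha}^T\mathbf{w}\preceq_\textup{CX}\boldsymbol{\beta}^T\mathbf{w}$.

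I expect the main obstacle to be bookkeeping rather than anything deep: keeping the permutation conventions consistent so that the reindexing and the exchangeability identity line up, and making sure the argument remains valid in the degenerate cases where some of the expectations equal $\pm\infty$ (the convex order only demands the inequality when both sides are well-defined, and it is the pointwise Jensen bound that legitimises the expectation step). It is also worth noting, as for Burda's theorem, that the logarithm plays no special role: the argument treats the whole class of convex $\phi$ simultaneously, which is precisely why we land on convex domination rather than merely the scalar bound inequality $\mathcal{L}_{\boldsymbol{\alpha}}(Q)\geq\mathcal{L}_{\boldsymbol{\beta}}(Q)$ that it implies via $\phi=-\log$.
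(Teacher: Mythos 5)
Your proof is correct and takes essentially the same route as the paper's: the paper defines $f(\boldsymbol{x}) = \mathbb{E}_Q[\phi(\boldsymbol{x}^T\mathbf{w})]$ and argues it is convex (your pointwise Jensen step followed by taking expectations) and symmetric (your exchangeability collapse $\boldsymbol{\beta}_\sigma^T\mathbf{w} \overset{d}{=} \boldsymbol{\beta}^T\mathbf{w}$), then applies the same convex-hull characterisation of majorisation. As a minor aside, your write-up correctly places $\boldsymbol{\alpha}$ in the convex hull of the permutations of $\boldsymbol{\beta}$, whereas the paper's proof contains a slip (``permuting the coordinates of $\boldsymbol{\alpha}$''), and your explicit attention to the integrability caveats is a small improvement in rigour over the published version.
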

\begin{proof} Let $\phi$ be a convex function.
	Exchangeabity of the weights and convexity of $\phi$ imply that the function $f: \boldsymbol{x} \mapsto \mathbb{E}_Q[\phi(\boldsymbol{x}^T\mathbf{w})] $ is convex and symmetric (i.e.~permutation-invariant).
	Consider now $\boldsymbol{\alpha}  \preceq_\textup{M} \boldsymbol{\beta}$. Then, $\boldsymbol{\alpha} $ belongs to the 
	convex hull of all vectors obtained by permuting the coordinates of  $\boldsymbol{\alpha}$. Since $f$ is symmetric, all these permuted vectors lead to the same value of $f$. Convexity of $f$ then leads to $\mathbb{E}_Q[\phi(\boldsymbol{\alpha}^T\mathbf{w})] \leq \mathbb{E}_Q[\phi(\boldsymbol{\beta}^T\mathbf{w})]$.
\end{proof}
This immediately gives a non-uniform version of sample monotonicity, that can be interpreted this way: \emph{the more spread out the coefficients $\alpha_{1},...,{\alpha}_{K}$ of the weights, the tighter the bound}.

\begin{cor}[\textbf{non-uniform sample monotonicity}]
	\label{cor:non-unif}
	Let $\boldsymbol{\alpha},\boldsymbol{\beta} \in \Delta_K$. If the weights are exchangeable, then
	\begin{equation}
		\boldsymbol{\alpha}  \preceq_\textup{M} \boldsymbol{\beta} \implies  \mathcal{L}_{\boldsymbol{\alpha}}(Q) \geq \mathcal{L}_{\boldsymbol{\beta}}(Q).
	\end{equation}
\end{cor}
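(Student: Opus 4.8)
The plan is to read the corollary as the direct composition of Theorem~\ref{th:marshall} with the Proposition on convex-order control of $\mathbb{E}[\log]$, so that no new idea is required: all the substance has already been placed in Theorem~\ref{th:marshall}, and the corollary merely transports its conclusion through the concavity of the logarithm. Concretely, I would chain two implications that are both available in the excerpt, the first turning the majorisation hypothesis into a one-dimensional convex-order statement, the second turning that convex order into the desired inequality between Monte Carlo objectives.

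First I would invoke Theorem~\ref{th:marshall}: since the weights are exchangeable and $\boldsymbol{\alpha}\preceq_\textup{M}\boldsymbol{\beta}$, the two scalar random variables $\boldsymbol{\alpha}^T\mathbf{w}$ and $\boldsymbol{\beta}^T\mathbf{w}$ satisfy $\boldsymbol{\alpha}^T\mathbf{w}\preceq_\textup{CX}\boldsymbol{\beta}^T\mathbf{w}$. Then I would apply the Proposition relating the convex order to the logarithm (equivalently, I would test the defining inequality of the convex order against the convex function $\phi=-\log$): this yields $\mathbb{E}_Q[-\log(\boldsymbol{\alpha}^T\mathbf{w})]\leq\mathbb{E}_Q[-\log(\boldsymbol{\beta}^T\mathbf{w})]$, that is $\mathbb{E}_Q[\log(\boldsymbol{\alpha}^T\mathbf{w})]\geq\mathbb{E}_Q[\log(\boldsymbol{\beta}^T\mathbf{w})]$, which is exactly $\mathcal{L}_{\boldsymbol{\alpha}}(Q)\geq\mathcal{L}_{\boldsymbol{\beta}}(Q)$. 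This closes the argument.

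Since every heavy step is inherited, there is essentially no obstacle; the only point deserving a word of care is the degenerate case in which one of the objectives equals $-\infty$ (which the excerpt explicitly allows). I would handle this by reading the final inequality in the extended reals: if $\mathcal{L}_{\boldsymbol{\beta}}(Q)=-\infty$ the conclusion is trivial, and otherwise both expectations are well defined and the chaining above is literal, consistently with the convex-order proviso ``such that the involved expectations exist.'' As a consistency check one may note that, by Equation~\eqref{eq:spread_out}, the uniform vector is the majorisation-minimum of $\Delta_K$, so the corollary correctly predicts that uniform coefficients give the tightest bound among all $\boldsymbol{\alpha}\in\Delta_K$.
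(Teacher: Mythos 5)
Your proof is correct and is exactly the paper's argument: the corollary is stated there as an immediate consequence of Theorem~\ref{th:marshall}, obtained by testing the convex order against the convex function $\phi=-\log$ (equivalently via the Proposition on convex-order control). Your extra remark on the $\mathcal{L}_{\boldsymbol{\beta}}(Q)=-\infty$ case is a sensible refinement the paper leaves implicit, but it does not change the route.
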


There are several immediate corollaries of Corollary \ref{cor:non-unif}. The first one is the original sample monotonicity result of \citet{burda2016}, stated as Theorem \ref{th:sample_mono} in our paper . Indeed, this result is direct consequence of the fact that $(1/(K+1),...,1/(K+1))  \preceq_\textup{M} (1/K,...,1/K,0) $. More generally, the left hand of Equation \eqref{eq:spread_out} implies that, when the importance weights are exchangeable,
\begin{equation}
	\mathcal{L}_{\boldsymbol{\alpha}}(Q) \leq  \mathcal{L}_K(Q),
\end{equation}
which means that \emph{if the weights are exchangeable, it is optimal to use the standard uniform average}. The practical guideline that comes with this is that we should not bother learning non-uniform coefficients when the weights are exchangeable. This is quite in line with the reasoning of \citet{huang2019}, who advocated the use of non-uniform coefficients together with non-exchangeable importance weights.

Less importantly, another corollary is Example \ref{eq:exch_necessary}. Indeed, using under the setup of Example  \ref{eq:exch_necessary}, the fact that $(1/2,1/2) \preceq_\textup{M} (2/3,1/3)$ implies that
\begin{equation}
	\label{eq:proof_exch_necessary}
	\mathcal{L}_{2}(Q) = \mathbb{E}\left[\frac{x+y}{2}\right] \geq \mathbb{E}\left[\frac{2x}{3} + \frac{y}{3}\right] = \mathbb{E}\left[\frac{x+y+x}{3}\right]= \mathcal{L}_{3}(Q).
\end{equation}

Another consequence of Theorem \ref{th:marshall} and Example \ref{ex:div} is that the $f$-divergence between the likelihood and its importance-sampling approximation is a decreasing function of $K$.

\section{Conclusion}

We have presented several simple results inspired by the sample monotonicity theorem of \citet{burda2016}. Further refinements and generalisations appear possible. For instance, in the non-exchangeable case, it seems reasonable that using non-uniform coefficients can be optimal, but our paper does not offer a theory for this.

Another important question concerns additional applications of such results. Concerning negative dependence, an interesting question is whether or not these sorts of investigations could provide a guide to design proposal distributions with the ``right amount of correlation'' required to tighten bounds.

As a concluding note, let us mention that we were surprised to notice that stochastic orders have been seldom applied to studying Monte Carlo methods. Among the few papers that we found that explored this connection,  a nice line of work originated by \citet{andrieu2016establishing} has used the convex order to analyse Markov chain Monte Carlo algorithms \citep{bornn2017use,leskela2017conditional,andrieu2018theoretical}. Other interesting work using stochastic order in a Monte Carlo setting include \citet{goldstein2011stochastic, goldstein2012stochastic} and \citet{bernard2019estimating}. We believe that the convex order provides a quite compelling way of assessing the accuracy of different Monte Carlo approximations, and can be a very valuable tool within the Monte Carlo theoretical toolbox. 

\begin{acks}[Acknowledgments]

This work has been supported by the French government, through the 3IA Côte d’Azur Investments in the Future project managed by the National Research Agency (ANR) with the reference number ANR-19-P3IA-0002. Furthermore, it was supported by the Novo Nordisk Foundation (NNF20OC0062606 and NNF20OC0065611) and the Independent Research Fund Denmark (9131-00082B).

 \end{acks}

\appendix
\section*{Appendix A. Proofs of examples}

\paragraph{Example \ref{prop:successes}} \emph{Let $R$ and $R'$ be either two gamma, two inverse gamma, or two log-normal distributions with finite and equal means and finite variances. Then
	\begin{equation}
		\textup{Var}(R) < \textup{Var}[R'] \iff \textup{Var}(\log R) < \textup{Var}[\log R'] \iff  \mathbb{E}[\log R] > \mathbb{E}[\log R'].
\end{equation}}

\begin{proof} We will treat each family separately.
	\paragraph{The gamma case.} Let $R \sim \mathcal{G} (\alpha, \beta)$ and $R' \sim \mathcal{G} (a, b)$ with $\alpha, \beta ,a,b> 0$. We remind that 
	\begin{equation}
		\mathbb{E}[R] = \frac{\alpha}{\beta}, \; \mathbb{E}[\log R] = \psi(\alpha)- \log(\beta), \; \textup{Var}(\log R) = \psi^{(1)}(\alpha), \; \text{Var}(R) = \frac{\alpha}{\beta^2},
	\end{equation}
	where $\psi$ is the digamma function and $\psi^{(1)}$ is the trigamma function.
	Assuming that $R$ and $R'$ have the same mean leads to $a= \alpha b / \beta$. We have then:
	\begin{align}
		\textup{Var}(R) < \textup{Var}[R'] \iff \frac{\alpha}{\beta^2} < \frac{a}{b^2} = \frac{\alpha}{\beta b} \iff \beta > b, 
	\end{align}
	and, using the fact that $\psi^{(1)}$ decreases,
	\begin{equation}
		\textup{Var}(\log R) < \textup{Var}[\log R'] \iff \psi^{(1)}(\alpha) < \psi^{(1)}(a) \iff \alpha>a \iff \beta > b.
	\end{equation}
	Regarding the bounds, we have
	\begin{align}
		\mathbb{E}[\log R] - \mathbb{E}[\log R'] &= \psi(\alpha) - \psi\left(\frac{\alpha b } {\beta}\right) + \log\left(\frac{b}{\beta}\right) \\
		&= \psi(\alpha) - \log(\alpha) - \left(\psi\left(\frac{\alpha b } {\beta}\right) - \log\left(\frac{\alpha b } {\beta}\right)\right).
	\end{align}
	Therefore, since the function $\psi - \log$ is increasing (see e.g.~\citealp{alzer1997}, Theorem 1), we get
	\begin{equation}
		\mathbb{E}[\log R] > \mathbb{E}[\log R'] \iff \alpha > \frac{\alpha b}{\beta} \iff \beta> b.
	\end{equation}

	\paragraph{The inverse gamma case.} Let $R \sim \mathcal{IG} (\alpha, \beta)$ and $R' \sim \mathcal{IG} (a, b)$ with $\alpha, \beta ,a,b> 0$. Since we assume that $R$ and $R'$ have finite variance, we must have $\alpha,a>2$. We remind that 
	\begin{equation}
		\mathbb{E}[R] = \frac{\beta}{\alpha-1}, \; \mathbb{E}[\log R] =  \log(\beta) - \psi(\alpha), \; \textup{Var}(\log R) = \psi^{(1)} (\alpha), \; \text{Var}(R) = \frac{\beta}{(\alpha-1)^2}.
	\end{equation}
	Assuming equality of the means leads to $b = \beta (a-1)/(\alpha - 1)$. We have then:
	\begin{equation}
		\textup{Var}(R) < \textup{Var}[R'] \iff \alpha > a.
	\end{equation}
	Since the trigamma function decreases, we also have
	\begin{equation}
		\textup{Var}(\log R) < \textup{Var}[\log R'] \iff \alpha > a.
	\end{equation}
	Let us now look at the bounds. We have
	\begin{align}
		\mathbb{E}[\log R] - \mathbb{E}[\log R'] &= \psi(a) - \psi(\alpha) +\log \left(\frac{\alpha - 1}{a -1} \right) \\
		&= \psi(a) - \log(a) - \left(\psi(\alpha) - \log(\alpha)\right) + \log\left(\frac{\alpha-1}{\alpha}\right) - \log \left(\frac{a-1}{a}\right).
	\end{align}
	Since the functions $\psi - \log$ and $x\mapsto \log((x-1)/x)$ are  increasing, we get
	\begin{equation}
		\mathbb{E}[\log R] > \mathbb{E}[\log R'] \iff \alpha > a.
	\end{equation}

	\paragraph{The lognormal case.}
	Let $R \sim \text{log} \,\mathcal{N} (\mu,\sigma)$ and $R' \sim \text{log} \,\mathcal{N} (m,s)$ with $\mu, m \in \mathbb{R}$ and $\sigma, s > 0$. We remind that
	\begin{equation}
		\mathbb{E}[R] = e^{\mu + \sigma^2/2}, \; \mathbb{E}[\log R] = \mu, \; \textup{Var}(\log R) =\sigma^2, \; \text{Var}(R)= \left(e^{\sigma^2} - 1 \right) e^{2\mu + \sigma^2}.
	\end{equation}
	The equality of the means implies that $2\mu + \sigma^2 = 2m + s^2$. Therefore, we have
	\begin{align}
		\textup{Var}(R) < \textup{Var}[R'] &\iff \sigma^2 < s^2 \iff \textup{Var}(\log R) < \textup{Var}(\log R') \\ &\iff  \sigma^2 < 2 \mu + \sigma^2 - 2 m \\  &\iff m  <  \mu  \iff  \mathbb{E}[\log R']< \mathbb{E}[\log R].
\end{align} 	\end{proof}

\paragraph{Example \ref{ex:failures}}\emph{
	Let $R$ be an inverse-gamma variable with finite mean. It is possible to find a log-normal random variable $R'$ such that
	\begin{itemize}
		\item $\mathbb{E}[R] = \mathbb{E}[R']$, $\textup{Var}(R) = \infty$, $\textup{Var}[R'] < \infty$,
		\item $\mathbb{E}[\log R] > \mathbb{E}[\log R']$.
\end{itemize}}

\begin{proof}
	Let $R \sim \mathcal{IG} (\alpha,\beta)$ and  $R' \sim \text{log} \,\mathcal{N} (\mu,\sigma)$ with $\alpha \in (1,2)$, $\beta,\sigma>0$ and $\mu \in \mathbb{R}$. Since we chose $\alpha \in (1,2)$, the mean of $R$ will be finite but its variance will be infinite. To get equality of the means of $R$ and $R'$, we further assume that
	\begin{equation}
		\mu = \log(\beta) - \log(\alpha-1)- \frac{\sigma^2}{2}.
	\end{equation}
	The difference between the bounds is then equal to
	\begin{equation}
		\mathbb{E}[\log R] - \mathbb{E}[\log R']  = - \psi(\alpha) + \log (\beta) - \mu =  - \psi(\alpha) + \log (\beta) + \frac{\sigma^2}{2},
	\end{equation}
	which can be arbitrarily large provided that $\sigma$ is large enough.
\end{proof}

\section*{Appendix B. Experimental details on the variance experiment}

The architecture of the DLVM that we trained is similar to the one of \citet{burda2016} except that we use a Student's $t$ distribution for the proposal instead of a Gaussian, following \citet{domke2018}. This choice was made because using a heavy-tailed proposal is likely to lead to better-behaved importance weights. The model is trained for $1{,}500$ epochs with a batch size of $20$ and the Adam optimiser of \citet{kingma2014adam}, with learning rate $2.10
^{-4}$.


\bibliography{biblio}

\end{document}